\title{Human Interaction with Recommendation Systems}
\author{Sven Schmit}
\affiliation{%
    \institution{Stanford University}
    % \department{Institute for Compuational and Mathematical Engineering}
    % \streetaddress{Via Ortega 475}
    % \city{Stanford}
    % \state{CA}
    % \postcode{94305}
    % \country{United States}
}
\email{schmit@stanford.edu}
\author{Carlos Riquelme}
\affiliation{%
    \institution{Stanford University}
    % \department{Institute for Compuational and Mathematical Engineering}
    % \streetaddress{Via Ortega 475}
    % \city{Stanford}
    % \state{CA}
    % \postcode{94305}
    % \country{United States}
}
\email{rikel@stanford.edu}
\thanks{This work is supported by the National Science Foundation.}
\date{\today}
\begin{document}

    \begin{abstract}
    Many recommendation algorithms rely on user data to generate recommendations.
    However, these recommendations also affect the data obtained from future users.
    This work aims to understand the effects of this dynamic interaction.
    We propose a simple model where users with heterogeneous preferences
    arrive over time.
    Based on this model, we prove that naive estimators,
    i.e.\ those which ignore this feedback loop,
    are not consistent.
    We show that consistent estimators are efficient in
    the presence of myopic agents.
    Our results are validated using extensive simulations.
\end{abstract}

% Shortened abstract in registration
% Recommendation systems rely on historical user data to provide suggestions. We propose an explicit and simple model for the interaction between users and recommendations provided by a platform, and relate this model to the multi-armed bandit literature. First, we show that this interaction leads to a bias in naive estimators due to selection effects. This bias leads to suboptimal outcomes, which we quantify in terms of linear regret. We end the first part by discussing ways to obtain unbiased estimates. The second part of this work considers exploration of alternatives. We show that although agents are myopic, agents' heterogeneous preferences ensure that recommendation systems 'learn' about all alternatives without explicitly incentivizing this exploration. This work provides new and practical insights relevant to a wide range of systems designed to help users make better decisions.

    \maketitle

\section{Introduction}
\label{sec:intro}

We find ourselves surrounded by recommendations that help us make better decisions.
However, relatively little work has been devoted to the understanding of the
dynamics of such systems caused by the interaction with users.
This work aims to understand the dynamics that arise when users combine the recommendations with
their own preference when making a decision.

For example, a user of Netflix uses their recommendations to decide what movie to watch.
However, this user also has her own beliefs about movies, e.g.~based on artwork, synopsis, actors,
recommendations by friends, etc.
The user thus combines the suggestions from Netflix with her own preferences to decide what movie to watch.
Netflix captures data on the outcome to improve its recommendations to future users.
Of course, this pattern is not unique to Netflix, but observed more broadly;
across all platforms that use recommendations.

% We are interested in gaining a fundamental understanding of the process of
% combining suggestions from a recommendation system with preferences of individual users.
% Doing so, we hope to illuminate important aspects of such systems that
% cannot be exposed by the data that these systems collect.
% Every outcome of a user interacting with an item observed by a recommendation system is,
% almost tautologically, observed \emph{after} a user has selected the item.
% However, this data is used to provide recommendations to users \emph{before} they have made a selection.
% This dynamic feedback model is often overlooked.
% Fundamentally, this work investigates the effects of user interaction
% on two issues: \emph{consistency} and \emph{efficiency}.

A first requirement for any estimator is consistency;
however it is not clear that in the presence of human interaction
naive estimators are consistent.
Indeed, we show that simple estimators can easily be fooled by the selection effect of the users.
We propose to measure performance by adapting the notion of regret
from the multi-armed bandit literature.
Using this metric, we show that naive estimators suffer linear regret;
even with `infinite data' the performance per time step is bounded away from the optimum.

Using the notion of regret is useful as it also allows us to quantify the efficiency of estimators.
New users and items with little to no data constantly arrive and thus
a recommendation system is always in a state of learning.
It is therefore important that the system learn efficiently from data.
While this might sound like the well-known \emph{cold-start} problem,
that is not the focus of this work;
Rather than providing recommendation solutions for users in the absence of data,
we focus on quantifying how quickly an algorithm obtains enough data to make
good recommendations.
This is more akin to the social learning and incentivizing exploration literature
than work on the cold-start problem.

\subsection{Main results}
\label{sec:intro_main}

From a technical standpoint,
this paper provides a dynamical model that captures the dynamics
of users with heterogeneous preferences,
while abstracting away the specifics of recommendation algorithms.
In the first part of this work,
we show that there is a severe selection bias problem that leads to linear regret.
Second, we show that when the algorithm uses unbiased estimates for items,
`free' exploration occurs and we recover the familiar logarithmic regret bound.
This is important because inducing agents to explore is difficult from
both a statistical and strategic point of view.
We validate our claims using simulations with feature-based and low-rank methods.

It is important to note that the focus of this work is
to provide a simplified framework that allows us to reason
about the dynamic aspects of recommendation systems.
We do not claim that the model nor the assumptions
are a perfect reflection of reality.
Instead, we believe that the model we propose provides an excellent lens to
better understand vital aspects of recommendation systems.

\subsection{Related work}
\label{sec:related}

This work roughly intersects with three separate fields of study.
\emph{Recommendation systems} \citep{Adomavicius2005TowardTN} have attracted
much attention.  In particular, much research has focused on new methods that
treat the data as fixed, rather than dynamic.  There has been less work on
selection bias,  which was first demonstrated by \citet{Marlin2003NIPS}, and
subsequent work \citep{amatriain2009like, Marlin2007UAI, Steck2010TrainingAT}.
Rather than modeling user behavior directly, they impose the statistical
assumption of a covariance shift; the distribution of observed ratings is not
altered by conditioning on the selection event, but five star ratings are more
likely to be observed.  More recently, \citet{Schnabel2016} and
\citet{Joachims2017UnbiasedLW} link the bias from covariance shifts to recent
advances in causal inference.  \citet{Mackey2010MixedMM} combine matrix
factorization and latent factor models to capture heterogeneity in interactions
and context.

The different approach of this work is reminiscent of the work on \emph{social
learning} \citep{chamley2004, smith2000pathological}, where agents learn about
the state of the world by combining their private signals with observations of
actions (but not necessarily outcomes) of others.
% The seminal work of \citet{smith2000pathological} shows that agents do not
% necessarily converge to the optimum outcome.
The work of \citet{Ifrach2014} is closest related to our setup.  They discuss
how consumer reviews converge on the quality of a product, given diversity of
preferences under a reasonable price assumption.
% Add Asu citation
However, the work in social learning focuses on users interacting with a single
item.  This seemingly minor minor difference leads to completely different
dynamics.

Finally, we can relate our work on exploration to the multi-armed bandit
literature \citep{Bubeck2012CoRR}.  In particular, there has been prior work on
human interaction with multi-armed bandit algorithms: for example, how a system
can optimally induce myopic agents to explore \citep{Kremer2013ImplementingT}
by using payments \citep{Frazier2014SIGECOM} or by the way the system
disseminates information \citep{papanastasiou2014crowdsourcing, Slivkins15,
MansourEC16}.  Similar to those works, we use the regret framework to analyse a
system with interacting agents.  Because in our model agents have heterogeneous
preferences, we show that agents do not need to be incentivized to explore.
Recent work by \citet{bastani2017exploiting, Qiang2016DynamicPW} consider
natural exploration in contextual bandit problems, and show that a modified
greedy algorithm performs well.  While their motivation is different, the
results are similar to ours.  There has also been work on `free exploration' in
auction environments \citep{Hummel2014WWW}.

\subsection{Organization}
\label{sec:organization}

In the next section, we introduce our model.  In Sections \ref{sec:bias} and
\ref{sec:exploration} we focus on the issues of consistency and efficiency,
respectively.  We illustrate our results with simulations in
Section~\ref{sec:simulations} before concluding.

    \section{Modeling human-algorithm interaction}
\label{sec:model}

In this section, we propose a model for the interaction between
the recommendation system (\emph{platform}) and users (\emph{agents}).
Each user selects one of the items the platform recommends,
and reports their experience to the platform by providing a rating as feedback.
The platform uses this feedback to update the recommendations for the next user.

More formally, we assume there are $K$ items, labeled $i = 1, \ldots, K$,
and each item has a distinct, but unknown, quality $Q_i \in \reals$.
This aspect models the vertical differentiation between items
and it is the task of the platform to estimate these qualities.
For notational convenience, we assume $Q_1 > Q_2 > \ldots > Q_K$.
At every time step $t = 1, \ldots, T$ a new user arrives
and selects one of the $K$ items.
To do so, the user receives a private preference signal $\theta_{it} \sim F_i$ for each item,
drawn from a preference distribution $F_i$ which we make precise later.
The value of item $i$ for user $t$ is
\begin{equation}
	V_{it} = Q_i + \theta_{it} + \epsilon_{it}
	\label{eqn:value}
\end{equation}
where $\epsilon_{it}$ is additional noise drawn independently from a noise distribution $E$
with mean $0$ and finite variance $\sigma_E^2 < \infty$.
To aid the agents, the platform provides a recommendation score $s_{it}$,
aggregating the feedback from previous agents.
The agent uses her own preferences, along with the score, to select item $a_t$ according to
\[
	a_t = \arg\max_i s_{it} + \theta_{it}.
	\label{eqn:selection_rule}
\]
Hence, we make the assumption that the agent is boundedly rational
and uses $s_{it}$ as a surrogate for the quality.
Abusing notation, we write
\[
	V_t = V_{a_t t} = Q_{a_t} + \theta_{a_t t} + \epsilon_{a_t t}
\]
for the value of the chosen item for agent $t$.
After the agent selects item $a_t$ and observes the value $V_t$, the
platform queries for feedback $W_t$ from the user.
For example the platform can ask the user to provide the value of the item as a rating,
in which case $W_t = V_t$.
Note that the private preferences $\theta_t$ of the agent remain hidden.
The platform uses this feedback to give recommendations to future users.
In particular, we require $s_{it}$ to be measurable with respect to past feedback,
that is $\sigma\{a_\tau, W_\tau : \tau < t\}$.

We measure the performance of
a recommendation system in terms of (pseudo-)regret:
\[
	R_T = \sum_{t=1}^T \max_i (Q_i + \theta_{it}) - (Q_{a_t} + \theta_{a_t t}),
\]
which sums the difference between the expected value of the best item
and the expected value of the selected item.\footnote{
	Unlike the traditional bandit setting, there is no single best item.
	Rather, different users might have different optimal items.
}
We note that if scores $s_{it} \equiv Q_i$ for all $t$, the regret of such platform would be $0$,
as each user selects her optimal action using equation (\ref{eqn:selection_rule}).
% We stress that this notion of regret serves as a tool,
% rather than a goal; we are not particularly interested in designing an algorithm that
% minimizes this notion of regret, but rather use regret to analyse simple algorithms,
% and gain insights into the dynamics of recommendation systems in general.

\subsection{Preferences, values and personalization}

We use this section to expand on the motivation of the proposed model.
The value for item $i$ at time $t$ consists of three parts (see equation (\ref{eqn:value})).
First, the intrinsic quality $Q_i$ can be seen as the mean quality across users.
In our theoretical analysis we treat this as a constant to be estimated such that
we are able to disentangle the model fitting from the dynamics of interaction.
In this simple setting, one should view it as a vertical differentiator between items.
Taking hotels as example, it could model quality of service and cleanliness,
where a common ranking across agents is sensible.
The intrinsic qualities $Q_i$ can be replaced by more complicated models,
for example based on feature based regression methods, or matrix factorization methods.
Indeed, in Section~\ref{sec:mf} we provide simulation results where we replace
$Q_i$ with a low rank matrix factorization model.

The second term in the value equation, $\theta_{it}$, models
horizontal differentiation across agents.
In our simplified model agents only arrive once, and thus
this also covers different contexts.
For example, one traveler prefers a hotel on the waterfront, while
another prefers a hotel downtown, and yet a third prefers staying close to
the convention center.
While these hypothetical hotels could have the same quality,
the value for users differs, in ways known to the user.
However, the intrinsic quality of these properties are unknown to these users.

All in all, the value of item $i$ for agent $t$ is drawn from a distribution
with mean $Q_i$, and where the variance consists of a part
that is known to the user ($\theta_{it}$) and a part that
is unknown to both platform and user ($\epsilon_{it}$).
In section~\ref{sec:simulations} we investigate how well the theoretical
results carry over to more general models.

One could argue that personalization methods
(i.e. replacing $Q$ with more sophisticated models) supersede the need
for idiosyncratic preferences $\theta_{it}$, as these preferences
can be captured by those models.
However, we argue that in most cases this factor cannot be eliminated.
% There are two reasons why some part of the user's preferences cannot be captured by covariates and historical data.
Every recommendation system is constrained in terms of
the \emph{quantity} and \emph{quality} of the data it is based on.
First, a user only interacts with a system so often,
and that limits the amount of personalization that models can achieve.
% Even if there are models that could capture all of the user's preferences,
% in practice systems lack the data to support such models.
% This implies that some part of the preferences remain beyond the scope of models.
Second, recommendation systems often have access to only weak features,
and some aspects of user preferences and contexts,
such as taste or style, can be difficult to capture.
Together, these constraints make it difficult to fully model users preferences,
hence the need to explicitly model the unobserved preferences
to get a deeper understanding of the dynamics of recommendations systems.

\subsection{Incentives}

We note that the agents in our model are boundedly rational:
their behavior is not optimal, and in particular ignores the design of the platform.
Experimentally, there has been abundant evidence of human behavior that
is not rational \citep{camerer1998bounded, kahneman2003maps}.
Simple heuristics of user behavior have been used by others in the social
learning community.
Examples include learning about technologies from word-of-mouth interactions
\citep{ellison1993rules, ellison1995word}
and modeling persuasion in social networks \citep{Demarzo2003PersuasionBS}.
The combination of machine learning and mechanism design with boundedly rational agents
is explored by \cite{liu2015mechanism}.

The behavior of the user in our model implicitly relies on three assumptions:
\begin{enumerate} \itemsep0em
	\item The user is naive;
		she beliefs the scores supplied by the platform are
		unbiased estimates of the true quality.
	\item The user is myopic;
		she selects the item that seems best for her.
	\item The user has incentives to give honest feedback.
\end{enumerate}
The first assumption seems unrealistic
if the platform abuses this power to dictate exploration, which does not align
with the myopic behavior.
However, in Section \ref{sec:exploration} we show that there is no need for such
aggresive exploration from the platform to obtain order-optimal performance.
We also note that if the platform outputs the true qualities $Q_i$,
then the selection rule (\ref{eqn:selection_rule}) is optimal for a myopic agent.
Finally, it is not obvious why a myopic user would leave feedback.
While we do not explicitly model returning users,
we argue that in general a user is motivated to leave feedback because
it leads to better recommendations for her in the future.

% We do admit that these assumptions cut some corners for the sake of simplicity,
% but they do seem like a good proxy for behavior in practice.

% mention truncated lists?

    \section{Consistency}
\label{sec:bias}

In this section, we analyse the performance of standard algorithms, that is,
scoring processes that do not take into account that agents have private
preferences, and base the scores on empirical averages.  This does include
algorithms that trade-off exploration and exploitation, such as variants of UCB
\citep{Auer2002MachineLearning} and Thompson Sampling \citep{Russo2016AnIA}.
We focus on the Bernoulli preferences model, though in
Section~\ref{sec:simulations} we empirically demonstrate that different
preference distributions lead to similar outcomes.

First we define the set of agents before time $t$ that have selected item $i$ by
\[
    S_{it} = \{ \tau < t: a_\tau = i \}.
\]
We also define $\bar{V}_{it}$ to denote the empirical average of item $i$ up to time $t$:
\[
    \bar{V}_{it} = \frac{1}{|S_{it}|} \sum_{\tau \in S_{it}} V_\tau
\]
where we use $\bar{V}_{it} = 0$ when $S_{it} = \emptyset$.
We want to show that the system suffers linear regret when
the platform uses any scoring mechanism
for which scores converge to the empirical average of the observed values.
This means that the system never converges to an optimal policy;
rather a constant fraction of users are misled into perpetuity.
To make this rigorous, we define the notion of \emph{mean-converging} scoring process.
\begin{definition}
    A scoring process that outputs scores $s_{it}$
    for item $i$ at time $t$ is \emph{mean-converging} if
    \begin{enumerate}
        \item $s_{it}$ is a function of $\{V_\tau : \tau \in S_{it}\}$ and $t$.
        \item $s_{it} \to \bar{V}_{it}$ almost surely if $\liminf_t \frac{|S_{it}|}{t} > 0$ almost surely.
    \end{enumerate}
\end{definition}

In words, the score only depends on the observed outcomes for this particular
item, and if we observe a linear number of selections of arm $i$, then the
score converges to the mean outcome.  Trivially, this includes using the
average itself as score, $s_{it} = \bar{V}_{it}$, but this definition also includes well known
methods that carefully balance exploitation with exploration, such as versions
of UCB and Thompson Sampling.

% \begin{lemma}
%     An upper confidence bound strategy with an upper bound of the form
%     $ \alpha\sqrt{\frac{\log(T)}{|S_{it}|}} $ is mean-converging.
% \end{lemma}
% This is immediate from the definition of mean-converging.
% The same is true for Thompson sampling as long as the prior is independent and well specified.
% \begin{lemma}
%     If the noise distribution $E_i$ has a normal distribution,
%     then Thompson sampling with an independent normal prior
%     for each $Q_i$ is mean-converging.
% \end{lemma}
% This follows because the prior washes out and
% therefore the posterior will converge to the empirical point estimate.

From the previous section, we know that, ideally,
the scores supplied to the user converge to the quality of the item, $s_{it} \to Q_i$, as more
users select item $i$.
We say that the scores are \emph{biased} if this is not the case:
\[
    s_{it} \not \to Q_i \quad\text{ as }\quad |S_{it}| \to \infty.
\]
The next proposition shows that mean-converging scoring processes lead to linear regret,
because these scores are generally biased.
We only show this result for when preferences are drawn from Bernoulli distributions as
this simplifies the analysis significantly.
In Section~\ref{sec:simulations} simulations show that linear regret is observed under
a variety of preference distributions.
Under the Bernoulli model, it is needed that the gap between qualities
is `small', though we show that this condition is rather weak.
\begin{proposition}
    When $\theta_{it} \sim $ Bernoulli$(p)$ for all $i, t$,
    if
    \[
        \Delta = Q_1 - Q_2 < \frac{(1-p)^K}{(1-p)^K + p}
    \]
    and $s_{it}$ is mean-converging,
    then
    \[
        \limsup_{t\to\infty} \frac{R_t}{t} \ge c
    \]
    for some $c > 0$.
\label{thm:bias}
\end{proposition}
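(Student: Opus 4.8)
The plan is to show that the Bernoulli preference structure creates a persistent selection bias that inflates the empirical average of the top item(s) enough to mislead a positive fraction of agents. Since the scores are mean-converging, it suffices to compute the limiting value to which $\bar V_{it}$ converges for each item whose selection count grows linearly, and then to argue that the induced selection rule is strictly suboptimal for a constant fraction of users, producing a constant per-step regret.

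\textbf{Step 1: identify the limiting scores.} First I would analyze the fixed point of the dynamics. Suppose the scores converge to some limits $s_i^\star$. Under the Bernoulli$(p)$ model, each $\theta_{it}\in\{0,1\}$, so agent $t$ selecting according to $a_t = \arg\max_i s_{it}+\theta_{it}$ partitions into a small number of cases depending on which items receive the preference boost $+1$. The key observation is that the set $S_{it}$ is \emph{not} an unbiased sample of users: conditional on selecting item $i$, the event $\{a_t=i\}$ correlates with $\theta_{it}$. In particular, an agent selects a lower-quality item $i$ predominantly when $\theta_{it}=1$ (that item got the idiosyncratic boost) while the competing higher-scored items did not. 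This means the observed values $V_\tau = Q_i+\theta_{i\tau}+\epsilon_{i\tau}$ for $\tau\in S_{it}$ are drawn conditionally on $\theta$ being large, so $\bar V_{it}$ converges to something strictly above $Q_i + \E[\theta]$. I would make this precise by writing the conditional expectation $\E[\theta_{it}\mid a_t=i]$ in terms of the limiting scores and the $\Bern(p)$ mass, giving a self-consistent system for the $s_i^\star$.

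\textbf{Step 2: exploit the small-gap condition to show a bias inversion.} The crux is that when $\Delta=Q_1-Q_2$ is smaller than the stated threshold $(1-p)^K/((1-p)^K+p)$, the selection bias on item $2$ (or some lower item) pushes its limiting score $s_2^\star$ \emph{above} $s_1^\star$, or at least close enough that a constant fraction of agents with $\theta_{1t}=0,\theta_{2t}=1$ are steered to the wrong item even when item $1$ is truly better for them. The threshold should emerge exactly from requiring the inflated score of the inferior item to overtake that of the superior one: the factor $(1-p)^K$ is the probability that none of the $K$ items receives a boost, which controls how often the top item is selected ``cleanly,'' while the extra $p$ captures the boosted selections of inferior items. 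I would set up the inequality that the bias gap exceeds $\Delta$ and verify it reduces to the given condition.

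\textbf{Step 3: convert persistent misselection into linear regret.} Once I establish that in the limit a fixed fraction $q>0$ of agents select a suboptimal item (i.e.\ there exists a positive-probability preference configuration under which the realized $\arg\max$ of $s_{it}+\theta_{it}$ differs from the $\arg\max$ of $Q_i+\theta_{it}$), each such agent contributes at least a fixed amount to the regret sum $R_T=\sum_t \max_i(Q_i+\theta_{it})-(Q_{a_t}+\theta_{a_t t})$. Summing gives $R_t \gtrsim c\,t$ for some $c>0$, hence $\limsup_t R_t/t\ge c$. The main obstacle, and where I would spend the most care, is Step 1--2: rigorously establishing that the scores actually converge to the claimed biased fixed point. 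Mean-convergence only guarantees $s_{it}\to\bar V_{it}$ \emph{provided} $\liminf |S_{it}|/t>0$, so I must separately argue that every relevant item is in fact selected a linear number of times (so the hypothesis of the definition is met), and then that the coupled limits $\bar V_{it}$ are well-defined and satisfy the self-consistent system; a fixed-point or contraction argument, together with a law-of-large-numbers bound on the conditionally-sampled values, is the natural tool, but handling the simultaneous convergence of all $K$ score processes under their mutual feedback is the delicate part.
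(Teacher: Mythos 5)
Your intuition (selection bias depresses the top-ranked item's observed average relative to lower-ranked items, and the threshold $\Delta < (1-p)^K/((1-p)^K+p)$ is exactly the overtaking condition) matches the paper, and your Step 3 is the same regret conversion the paper uses. But there is a genuine gap in Steps 1--2, and it is precisely the point you flag as ``delicate'' and leave unresolved: your argument presupposes that the scores converge to a self-consistent fixed point $s_i^\star$. Mean-convergence only gives $s_{it} - \bar V_{it} \to 0$ (and only when $\liminf_t |S_{it}|/t > 0$); it does \emph{not} make $\bar V_{it}$ itself converge, because $\bar V_{it}$ is a mixture whose composition depends on the fraction of selections of item $i$ that occurred while item $i$ was top-ranked, and that fraction need not stabilize. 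Indeed the natural behavior here is oscillation: whenever item $1$ sits at the top its average gets depressed (it is picked by unboosted agents with probability $(1-p)^K$, so its observed mean drifts toward $Q_1 + \rho$ with $\rho = p/(p+(1-p)^K) < 1$), it then loses the top spot, recovers, and so on. A fixed-point or contraction argument would have to rule this out, and it cannot, because the conclusion of the proposition is only $\limsup_t R_t/t \ge c$ --- the process genuinely need not converge.

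The paper's proof shows how to avoid needing any convergence. It defines $c_t$, the fraction of time up to $t$ that item $1$ is \emph{not} ranked first, and argues by contradiction on $c = \limsup_t c_t$. Conditional on selection, the probability that item $1$ was top-ranked at that selection is sandwiched between two explicit monotone functions $\lambda(1-c)$ and $\lambda'(c)$ (corresponding to the extreme cases of being ranked second or last when not first). These sandwich bounds, plus the fact that the observed mean is $Q_i+\rho$ for top-ranked selections and $Q_i+1$ otherwise, yield $\limsup_t s_{1t} \le Q_1 + \lambda(1-c)\rho + (1-\lambda(1-c))$ and $\liminf_t s_{2t} \ge Q_2 + \lambda'(c)\rho + (1-\lambda'(c))$. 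For $c=0$ the gap condition forces $\limsup_t s_{1t} < \liminf_t s_{2t}$, contradicting that item $1$ is always ranked first; by continuity of $\lambda,\lambda'$ the contradiction persists for all $c \le c^*$ for some $c^* \in (0,1)$. Hence $\limsup_t c_t > c^*$ almost surely, and each period in which item $1$ is not top-ranked contributes expected regret at least $(1-p)^K\Delta$ (unboosted agents follow the wrong ranking), giving the linear lower bound. If you want to salvage your outline, replace the fixed-point system in Steps 1--2 with this kind of $\limsup$/$\liminf$ sandwich-plus-contradiction argument; as written, the keystone of your proof is an unproved (and in general false) convergence claim.
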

The proof of this proposition can be found in the supplemental material.
The intuition behind the result is that the best ranked item is selected by users
that do not necessarily like it that much, while other items are only selected by
users who really love it.
Therefore, the ratings of the best ranked item suffers relative to others.

We note that for $K=100$ and $p=1/50$, the condition requires $\Delta < 0.86$.
More generally, in the relevant regime where $p < \frac{\log(K)}{2K}$,
the condition is satisfied if $\Delta < 0.7$ for all $K$.
We also note that the linear regret we obtain is not caused by the usual
exploration/exploitation trade-off, but rather the estimators being biased.

There is no bias result for general preference distributions
as it is possible to cherry pick distributions in such
a way that biases cancel each other out exactly.
Furthermore, the magnitude of the problem depends crucially on the variance in
user preference relative to the differences in qualities.\footnote{
    In the limiting scenario of no variance in preferences,
    we already know that there is no bias either.
}
However, in Section~\ref{sec:simulations} we provide simulations with a variety
of preference distributions that suggest that bias is not an artifact of our assumptions.

\subsection{Unbiased estimates}

Naturally, a first attempt to improve the linear regret
is aimed at obtaining unbiased versions of the naive averaging.
We now sketch a few such approaches.

\subsubsection{Randomization}
Researchers can avoid selection bias in experimental studies by randomizing treatments,
and we can employ the same approach here.
Instead of the user choosing an action,
the platform assigns matches between users and items.
Note that pure randomization is not needed, user ratings are unbiased
as long as the selection of items is independent from private preferences.

Just like randomized control studies, this approach is often infeasible or
prohibitively expensive, e.g. a platform cannot force the user to select
a certain hotel or restaurant.
However, small scale user experimentation can potentially inform the platform
on the magnitude and effects of idiosyncratic preferences and the bias it induces.

\subsubsection{Algorithmic approach}
Another option is to consider algorithmic approaches to obtain consistent estimators.
In the case of Bernoulli preferences,
it is possible to obtain unbiased estimates from data.
However, this approach does not generalize to other preference distributions,
let alone to the case where we do not know the underlying preference distributions.

% Thus, when presented with real data, we would be hesitant to impose such rigid
% structure on the preferences of users.
% While the Bernoulli model is helpful in yielding a tractable model that gives insight
% into the main dynamics, it is not realistic to think it accurately reflects the preferences in practice.
% At a minimum, it makes sense to use a continuous preference distribution.
% However, algorithmically unbiasing the data appears impracticable and intractable in this setting;
% impracticable because it still requires strong assumptions on the preference distributions,
% and intractable because it requires integrating over the preference distributions for each item.

\subsubsection{Changing the feedback model}
Given the difficulty of debiasing feedback algorithmically, we briefly discuss
a third alternative.  The traditional type of question `How would you rate this
item?' asks for an absolute measure of satisfaction, which corresponds to
directly probing for $V_t$ in our model.  If we ask how the chosen item
compares to the expectation, we ask for a relative measure of feedback,
approximating $V_t - (s_{a_t t} + \theta_{a_t t})$.  An example of such prompt
could be `How does this item compare to your expectation?' This way, we can
uncover an unbiased estimate of $Q_{a_t}$.  Importantly, it does not require
any distributional assumptions on the form of the preferences.  Whether such
relative feedback works in practice would require a thorough empirical study,
which is beyond the scope of this work.  We also note that this approach does
not work when platforms collect implicit feedback.

% While relative feedback seems appealing, there are two caveats.
% Users might not be capable of giving such feedback,
% or maybe users already give such feedback implicitly, or that
% `a priori expectation' is something users are not capable of accurately reporting.
% The second caveat comes from the fact that many recommendation systems now rely
% more on implicit feedback, e.g.,~did someone finish watching the movie.
% Such measures are necessarily absolute and our suggestion is vacuous in such scenarios.
% A thorough study of this approach is beyond the scope of this work.

    \section{Efficiency}
\label{sec:exploration}

From the previous section we know that naive scoring mechanisms
are inconsistent and lead to linear regret.
We now focus on the efficiency of consistent estimators,
and we assume we have access to unbiased feedback from now on.
But, this is not necessarily sufficient to guarantee good performance.
The multi-armed bandit literature suggests that algorithms with small regret
require a careful balance between exploration and exploitation.

In particular, that means that the system needs to obtain data on
every item in order to provide useful scores to the users.
However, myopic agents have no interest in assisting the platform with exploration.
% Therefore, there has recently been an increased interest in understanding this trade-off
% in the presence of myopic agents, who are naturally interested in
% doing well for themselves, rather than helping the platform learn
% \cite{Frazier2014SIGECOM, Slivkins15, MansourEC16, papanastasiou2014crowdsourcing}.
% The previously mentioned works all address the question of how to incentivize
% users to not act myopically in different ways.
In this section we address the problem of exploration in the proposed model.
As opposed to the research mentioned in the introduction,
we deal with agents with heterogeneous preferences.
It seems natural that these heterogeneous agents help the system explore,
but it is not obvious to what extent this helps.
We show that because of this diversity in preferences,
the free exploration leads to optimal performance (up to constants);
we recover the standard logarithmic regret bound from the bandit literature.
This means that there is little need for a platform to implement a complicated
exploration strategy, and incentives naturally align much better than in
the settings of previous work.

\subsection{Formal result}
We assume the qualities $Q_i$ are bounded, and without loss of generality we can
assume they are bounded in $[0, 1]$, and
we assume access to unbiased feedback from the user.
That is, the feedback at time $t$ for chosen item $i_t$ is
$\tilde{V_t} = Q_{i_t} + \epsilon_t$.
However, we no longer require that private preference distributions are Bernoulli.
Let $\bar V_{it}$ denote the average of the values observed of item $i$ by time
$t$
\[
  \bar V_{it} = \frac{1}{|S_{it}|} \sum_{\tau \in S_{it}} \tilde{V}_\tau
\]
where $S_{it} = \{ \tau < t : a_\tau = i\}$.
We now consider the scoring algorithm that clips the value onto $[0, 1]$:
$
  % s_{it} = \begin{cases}
  %   0 & \bar V_{it} < 0\\
  %   1 & \bar V_{it} > 1\\
  %   \bar V_{it} & \text{otherwise}
  % \end{cases}
  s_{it} = \max(0, \min(1, V_{it}))
$.

Then, if the private preferences have sufficiently large variance,
made precise in the theorem statement, then exploration is guaranteed
and the platform suffers logarithmic regret.
Let $\Delta_{\min}$ be the smallest gap in qualities
$
  \Delta_{\min} = \min_{i, j} |Q_i - Q_j|.
$

% We consider a platform that uses empirical averages of these unbiased scores,
% \[
%     \tilde{s}_{it} = \frac{1}{|S_{it}|} \sum_\tau \tilde{V}_\tau
% \]
% However, this is not enough to ensure a tight regret bound,
% as a single dramatically low rating for some item
% can cause all future agents to ignore that item forever.
% Therefore, we impose the condition that
% \[
%     1 + \min_i \tilde{s}_{it} > \max_i \tilde{s}_{it}
% \]
% by increasing the lowest scores to satisfy this criterion if needed.
% In practice, this ensures that for any item and any set of valid scores
% there is a realization of preferences for a user such that the item is selected.
% It is impossible to prove a general regret bounds for the empirical scores
% with discrete signals and unbounded support for error terms;
% with some small but positive probability, an item gets such a terrible rating
% that the signals cannot make a difference.
% From a practical standpoint, it would also not make sense to
% offer items for which the platform knows no one is interested.
% Note that this does not lead to incentive issues
% because we assume all qualities lie in the unit interval.

The following result shows that empirical averaging
is enough to get an order optimal (pseudo)-regret bound with respect to
the total number of agents $T$.
\begin{proposition}
  \label{thm:exploration}
  If for all $i$, $F_i$ are such that $\P(\theta_{it} > 1) > \gamma'$
  and $\P(\theta_{it} \le 0) \ge \gamma$
  Then
  \begin{multline}
    \E[\regret(T)]
    \le \left( \frac{16\sigma^2}{\Delta_{\min}} + \Delta_{\min} \right) K\\
    + \frac{32 \alpha \sigma^2 K(\log(T) - \log(\Delta_{\min})
    + \log(2))}{\Delta_{\min}^2 C}
  \end{multline}
  where $C = \gamma' \gamma^{K-1}$.
\end{proposition}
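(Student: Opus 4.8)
The plan is to combine a \emph{free-exploration} lemma with a standard UCB-style concentration-and-counting argument. The single most important step is to show that, regardless of the current scores, each item is selected with probability at least $C=\gamma'\gamma^{K-1}$ at every round. First I would exhibit the event $\{\theta_{it}>1\}\cap\bigcap_{j\neq i}\{\theta_{jt}\le 0\}$, which has probability at least $\gamma'\gamma^{K-1}$ by independence of the preference draws across items. On this event, since clipping forces every score into $[0,1]$, the perceived value of item $i$ satisfies $s_{it}+\theta_{it}>1$ while every competitor obeys $s_{jt}+\theta_{jt}\le 1$; hence $a_t=i$. Consequently $|S_{it}|$ stochastically dominates a $\mathrm{Binomial}(t-1,C)$ variable, so every arm accrues $\Omega(Ct)$ samples and its clipped empirical mean is forced to converge to $Q_i$.

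Next I would reduce the per-round regret to estimation error. Writing $i^\ast$ for the context-optimal item, the facts that $a_t$ maximizes $s_{\cdot t}+\theta_{\cdot t}$ while $i^\ast$ maximizes $Q_\cdot+\theta_{\cdot t}$ give, after cancelling the shared preference terms, the bound
\[
  r_t \le (Q_{i^\ast}-s_{i^\ast t}) + (s_{a_t t}-Q_{a_t}) \le |s_{i^\ast t}-Q_{i^\ast}| + |s_{a_t t}-Q_{a_t}|.
\]
Because $Q_i\in[0,1]$, clipping can only shrink the error, so $|s_{it}-Q_i|\le|\bar V_{it}-Q_i|$ throughout, and the total regret is controlled by the estimation errors of the optimal and the chosen arms.

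Then I would run the counting argument. Since the unbiased feedback $\tilde V_\tau=Q_i+\epsilon_\tau$ has variance $\sigma^2$, a sub-Gaussian/Hoeffding bound yields $\P(|\bar V_{it}-Q_i|>\Delta_{\min}/2 \mid |S_{it}|=n)\le 2\exp(-n\Delta_{\min}^2/(8\sigma^2))$, with $\alpha$ absorbing the sub-Gaussian constant. Declaring arm $i$ \emph{reliable} once $n\gtrsim \alpha\sigma^2\log T/\Delta_{\min}^2$, the geometric tail $\sum_n \exp(-n\Delta_{\min}^2/(8\sigma^2))\approx 8\sigma^2/\Delta_{\min}^2$, multiplied by the $O(\Delta_{\min})$ cost of a misranked round, produces the constant term $(16\sigma^2/\Delta_{\min}+\Delta_{\min})K$. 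For the logarithmic term I would convert the required sample count into a round count: by the free-exploration lemma and a multiplicative Chernoff bound on the $\mathrm{Binomial}(t-1,C)$ lower bound for $|S_{it}|$, every arm becomes reliable within $O(\alpha\sigma^2\log T/(\Delta_{\min}^2 C))$ rounds, and summing the at-most-constant per-round regret over these rounds and over the $K$ arms yields the second term.

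The hard part will be the two places where the selection rule and the estimates are coupled. First, the samples of arm $i$ are gathered only on the rounds where $i$ is chosen, so $\bar V_{it}$ is an average over an adaptively chosen, data-dependent index set; the free-exploration lower bound is precisely what lets me decouple this and treat $|S_{it}|$ as a genuine (random but controllable) sample size before invoking concentration. Second, even when all arms are reliable a misranking can still occur when the context margin between the top two items is smaller than the residual score errors, so I must verify that these small-margin rounds contribute only the convergent geometric tail above rather than a linear term, i.e.\ that the regret of such a round is bounded by the sum of the residual deviations and not merely by $\Delta_{\min}$. Reconciling the Chernoff event (enough samples) with the concentration event (accurate mean) through a careful union bound over the $T$ rounds is what generates the $\log T$ factor and is the main technical obstacle.
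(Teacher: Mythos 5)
Your proposal follows essentially the same route as the paper's proof: the free-exploration event $\{\theta_{it}>1\}\cap\bigcap_{j\neq i}\{\theta_{jt}\le 0\}$ combined with score clipping to get a per-round selection probability of $C=\gamma'\gamma^{K-1}$ for every item, stochastic domination of $|S_{it}|$ by a Binomial, sub-Gaussian concentration with a union bound over sample counts, and a Chernoff counting argument that assembles the constant term and the $\log T/C$ term exactly as in the paper. The ``hard part'' you flag --- that even after all arms are reliable a small context margin can still cause misranking --- is precisely the step the paper dispatches by asserting that once every score is within $\Delta_{\min}/2$ of its quality the per-round regret is zero, so your plan is, if anything, more explicit about the one place where the argument is delicate.
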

The proof can be found in the supplement.
The main idea is that we can show every arm is chosen sufficiently often
initially, and then concentration inequalities ensure good performance
after an initial learning phase.% ~\ref{sec:proof_exploration}.

\begin{corollary}
  Suppose $F_i$ is a symmetric distribution around $0$ such that
  $\P(\theta_{it} > 1) > \gamma$, then the theorem applies with
  $C \ge \gamma 2^{1-K}$.
\end{corollary}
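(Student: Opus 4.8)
The plan is to show that a symmetric preference distribution automatically meets both tail hypotheses of Proposition~\ref{thm:exploration}, so that the corollary reduces to reading off the value of the constant $C$. Recall that the proposition asks for a lower bound $\gamma'$ on $\P(\theta_{it} > 1)$ and a lower bound $\gamma$ on $\P(\theta_{it} \le 0)$, and then delivers the stated regret bound with $C = \gamma' \gamma^{K-1}$. I first note a notational point that requires care throughout: the symbol $\gamma$ in the corollary's hypothesis plays the role of the proposition's upper-tail parameter $\gamma'$, since the corollary assumes $\P(\theta_{it} > 1) > \gamma$. Accordingly, the first step is simply to take the proposition's $\gamma'$ to be the corollary's $\gamma$, which the hypothesis validates immediately.

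The only substantive step is to extract the lower-tail bound from symmetry. Because $F_i$ is symmetric about $0$, the distribution of $\theta_{it}$ coincides with that of $-\theta_{it}$, so $\P(\theta_{it} < 0) = \P(\theta_{it} > 0)$. Since these two events together with $\{\theta_{it} = 0\}$ partition the sample space, I would write $2\,\P(\theta_{it} < 0) = 1 - \P(\theta_{it} = 0)$, and therefore
\[
  \P(\theta_{it} \le 0) = \P(\theta_{it} < 0) + \P(\theta_{it} = 0)
  = \tfrac{1}{2} + \tfrac{1}{2}\,\P(\theta_{it} = 0) \ge \tfrac{1}{2}.
\]
The reason to keep the event $\{\theta_{it} \le 0\}$ rather than $\{\theta_{it} < 0\}$ is precisely this: any atom at the symmetry point only raises the probability, and even with no atom we recover exactly $1/2$. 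Hence the proposition's lower-tail parameter may be taken to be $1/2$, uniformly over all the symmetric distributions $F_i$.

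Finally, I would substitute these two valid bounds into the formula $C = \gamma' \gamma^{K-1}$ supplied by Proposition~\ref{thm:exploration}, using $\gamma' = \gamma$ (the corollary's $\gamma$) and lower-tail parameter $1/2$, to obtain $C = \gamma \cdot (1/2)^{K-1} = \gamma\, 2^{1-K}$. Since the true tail probabilities can only exceed the bounds just used, the value of $C$ available to the proposition is at least this quantity, which is exactly the claim $C \ge \gamma\, 2^{1-K}$. I do not expect any genuine obstacle: the entire content is the elementary symmetry computation above, and the only place demanding attention is the bookkeeping between the two distinct meanings of $\gamma$ in the proposition and in the corollary, which the identification $\gamma' = \gamma$ resolves cleanly.
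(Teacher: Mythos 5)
Your proposal is correct and is exactly the intended argument: the paper states the corollary without proof as an immediate consequence of Proposition~\ref{thm:exploration}, and indeed the paper's own proof of that proposition already uses the selection-probability lower bound $2^{1-K}\gamma$, which is precisely your identification $\gamma' = \gamma$ combined with the symmetry bound $\P(\theta_{it} \le 0) \ge \tfrac{1}{2}$. Your handling of a possible atom at $0$ (keeping the event $\{\theta_{it} \le 0\}$ rather than $\{\theta_{it} < 0\}$) is careful and matches what the proposition's hypothesis requires.
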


Also note that the theorem applies for Bernoulli preferences, where $C = p (1-p)^{K-1}$.
The small value of $C$ leads to a large leading constant.
Simulations suggest that performance is much better in practice.

% We remark that this is a problem dependent bound,
% and it is not optimized in terms of $K$ and the constants.
% In simulations (discussed in Section~\ref{sec:simulations}) we show
% that, depeding on the variance of preferences, the performance of this simple
% learning algorithm is excellent.\footnote{
%   Based on standard multi-armed bandit model, we know that
%   regret must be linear if there is no variance in preferences.
% }

\subsection{Remarks}
The main take-away from this result is not so much the specific bound,
but rather the practical insight that it, and its proof, yield.
The intuition is that initially estimates of quality are poor.
Therefore, it takes some time and luck for users with idiosyncratic preferences to
try these items.
As estimates improve, however, most agents are drawn to their optimal choice.
Since these choices differ across agents, the platform gets to learn efficiently
about all items without incurring a regret penalty.

The practical consequence of this observation is that to improve the performance,
the designer of a recommendation system should focus on simple ways to make new items,
or more generally items with few observations, more likely to be chosen.
We can achieve this by highlighting new arrivals.
A good example is Netflix, which clearly displays a `Recently Added' selection.

    \section{Simulations}
\label{sec:simulations}

In this section we empirically demonstrate that the theoretical results derived
in the previous sections hold much more broadly.
First, we focus on verifying the results from our simplified model.
Thereafter, we consider more advanced personalization models using
feature-based and low rank methods, where we
investigate the dynamics with private preferences.\footnote{
    The code to replicate the simulations is publicly available
    at \url{https://github.com/schmit/human_interaction}.
}

\subsection{Simulations of regret}

Before considering more advanced methods, we simulate our model using
different preference distributions and plot the cumulative regret over time.
We run 50 simulations with 5000 time steps and $K=50$ items across four preference distributions
with randomly drawn parameters.
We then compare biased and unbiased algorithms based on empirical averages.
Figure~\ref{fig:plot_bias_regret} shows the cumulative regret paths for each of these simulations.
The qualities were drawn from the uniform distribution over $[0,1]$.
For the preference distributions $F_i$ for item $i$, we used
\begin{itemize} \itemsep0em
    \item Bernoulli distribution with $p_i \sim U[0, \frac{2\log(K)}{3K}]$.
    \item Normal distribution with $\mu_i = 0$ and $\sigma_i \sim U[0, 1]$.
    \item Exponential distribution with scale $\lambda_i^{-1} \sim U[0, 1]$.
    \item Pareto distribution with shape $\alpha_i \sim U[2, 4]$.
\end{itemize}
These are chosen such that the variance in preference and qualities is
roughly similar.
A clear pattern emerges;
In all cases, the (biased) empirical averages lead to linear regret,
not just for the Bernoulli model covered by Proposition~\ref{thm:bias}.
Second, we note the unbiased scores lead to much better results regardless
of the preference distribution, in line with Proposition~\ref{thm:exploration}.

\begin{figure}[h]
    \centering
    \includegraphics[width=0.8\textwidth]{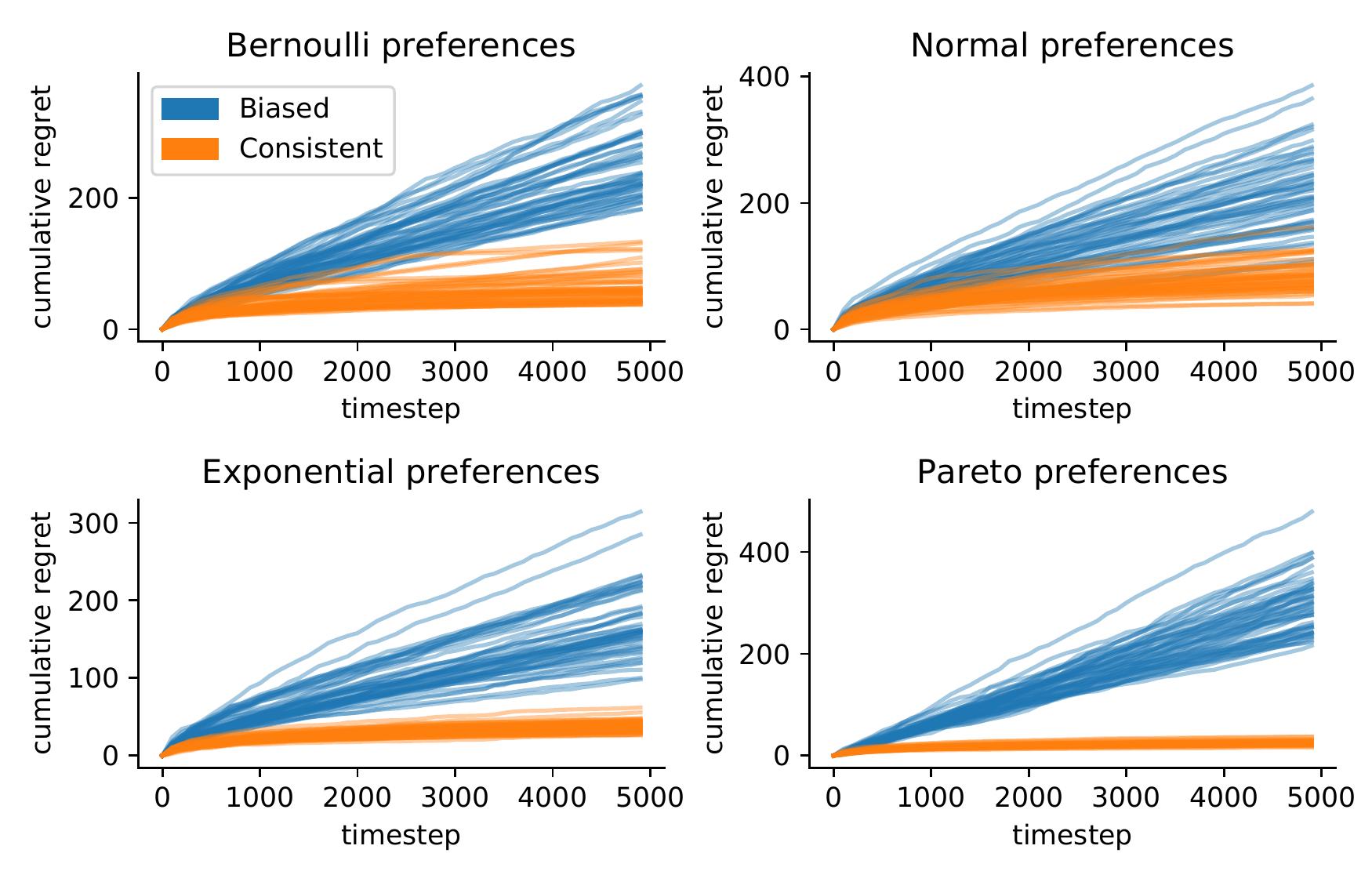}
    \caption{
        These plots shows the cumulative regret plotted against time for both the
        naive empirical averages in blue, and the unbiased averages in
        orange.
        % Different distributions for users' preferences are used
        % to show that bias and exploration results hold broadly.
    }
    \label{fig:plot_bias_regret}
\end{figure}

\subsection{Personalization methods}

We now focus on methods that provide more personalized recommendations.
Because estimating such models is more complicated and computationally intensive,
we simplify the dynamics of our simulations to a two-staged approach.
We then use this approach to experiment with a feature-based
and a low-rank approximation approach to personalization based on
synthetically generated data.

\subsubsection{Two-staged simulations}

In our original setup, the platform updates its scoring rule after every observation.
This is impractical when dealing with more sophisticated models.
Instead we first collect a set of observations using a fixed scoring algorithm (a \emph{training set}),
and fit the scoring algorithm once to this training data.
We then use this trained algorithm to generate a new set of observations (a \emph{test set}),
again without updating the algorithm in between observations.
The test set is used to measure the performance of the fitted algorithm.
Instead of using regret, we measure performance by directly computing the average rating
on the second dataset generated by our trained algorithm.
% \begin{enumerate} \itemsep0em
%     \item We generate a rating dataset (training set) based on a fixed score rule.
%     \item We fit a model once to the entire dataset to obtain a new scoring rule,
%         which we then use to generate a new dataset (test set) with ratings.
%     \item We generate a new dataset (test set) using the new scoring rule.
%     \item We compute the average rating on the test set as measure of performance.
% \end{enumerate}
Note that it is possible to iterate generating data and fitting models multiple times
before generating a test set.

\subsubsection{Ridge regression}
\label{sec:lr}

In this section we discuss a feature-based model of personalization where
the rating is assumed to be a linear function of observed covariates.

\textbf{The model}
In the feature-based setting, each item has an unknown parameter vector $w_i$
and each user-item pair has an observed feature vector $x_{it}$.
The value of item $i$ for user $t$ then becomes
\[
    V_{it} = Q_i + x_{it}^T w_i + \theta_{it} + \epsilon_{it}.
\]
Furthermore, we also parametrize $\theta_{it}$ in terms of $x_{it}$,
such that
\[
    \theta_{it} \sim \N(x_{it}^T \tilde{w}, \sigma_{\theta})
\]
where $\tilde{w}$ is another unknown parameter vector.
After generating the training set using a fixed scoring rule,
we use ridge regression to regress the reported
ratings for each item, which leads to estimates $\hat Q_i$ and $\hat w_i$ for each item.
These are then used as scoring rule: $s_{it} = \hat Q_i + x_{it}^T \hat w_i$.

\textbf{Simulation details}
There are $n = 100$ items, and $w_i \in \reals^p$ where $p = 20$.
We generate $Q_i \sim \N(0, 1)$ and $w_{ij}, \tilde{w}_{ij} \sim \N(0, 1/\sqrt{p})$ independently.
The elements of the feature vectors are generated independently following
$x_{ijt} \sim \N(0, 1)$.
The error term is drawn according to $\epsilon_{it} \sim \N(0, 1)$,
and we set $\sigma_\theta = 0.1$.
We generate $10np = 20000$ observations.

The training sets are generated using four different scoring rules:
\begin{enumerate}\itemsep0em
    \item Using the \emph{oracle} scoring rule: $s_{it} = Q_i + x_{it}^T w_i$,
        which leads to perfect recommendations.
    \item Using the oracle scoring rule
        and unbiased ratings $V_{it} - \theta_{it}$.
    \item Using randomly selected items, hence the user has no choice.
    \item We iterate steps one and two twice, where we first use randomly
        selected items, then fit a Ridge regression to estimate the
        parameters, and use these to generate the test set: $s_{it} = \hat Q_i
        + \hat x_{it}^T w_i$.
\end{enumerate}
This last training set allows us to better understand how the system evolves over time.

\textbf{Results}
The average values of the selected items in the test set are
plotted in the left plot of Figure~\ref{fig:lr_mf}.
The two dotted lines provide useful benchmarks:
the top line shows the performance of the oracle scoring rule,
which upper bounds the performance.
The bottom line shows the performance in the absence of a scoring rule,
that is $s_{it} = 0$ for all $i, t$.
Finally, note that random selections lead to an average rating of $0$.

We note that the best performing recommendations are given by the model
trained on random data (green); these are close to the performance of the oracle.
Unbiased ratings based on the oracle (orange) perform a bit worse
due to a feedback loop.
The model is only trained on `good' selections and this leads to a degradation
of performance.
We also see that the iterated model (red) that was initially trained on random selections
performs worse than the randomly generated data,
suggesting that the quality deteriorates over time.
Finally, the model trained on oracle data (blue) performs much worse than all the models,
and does not perform much better than the `no-score algorithm' that does not provide recommendations.

\begin{figure}
    \centering
    \includegraphics[width=0.8\textwidth]{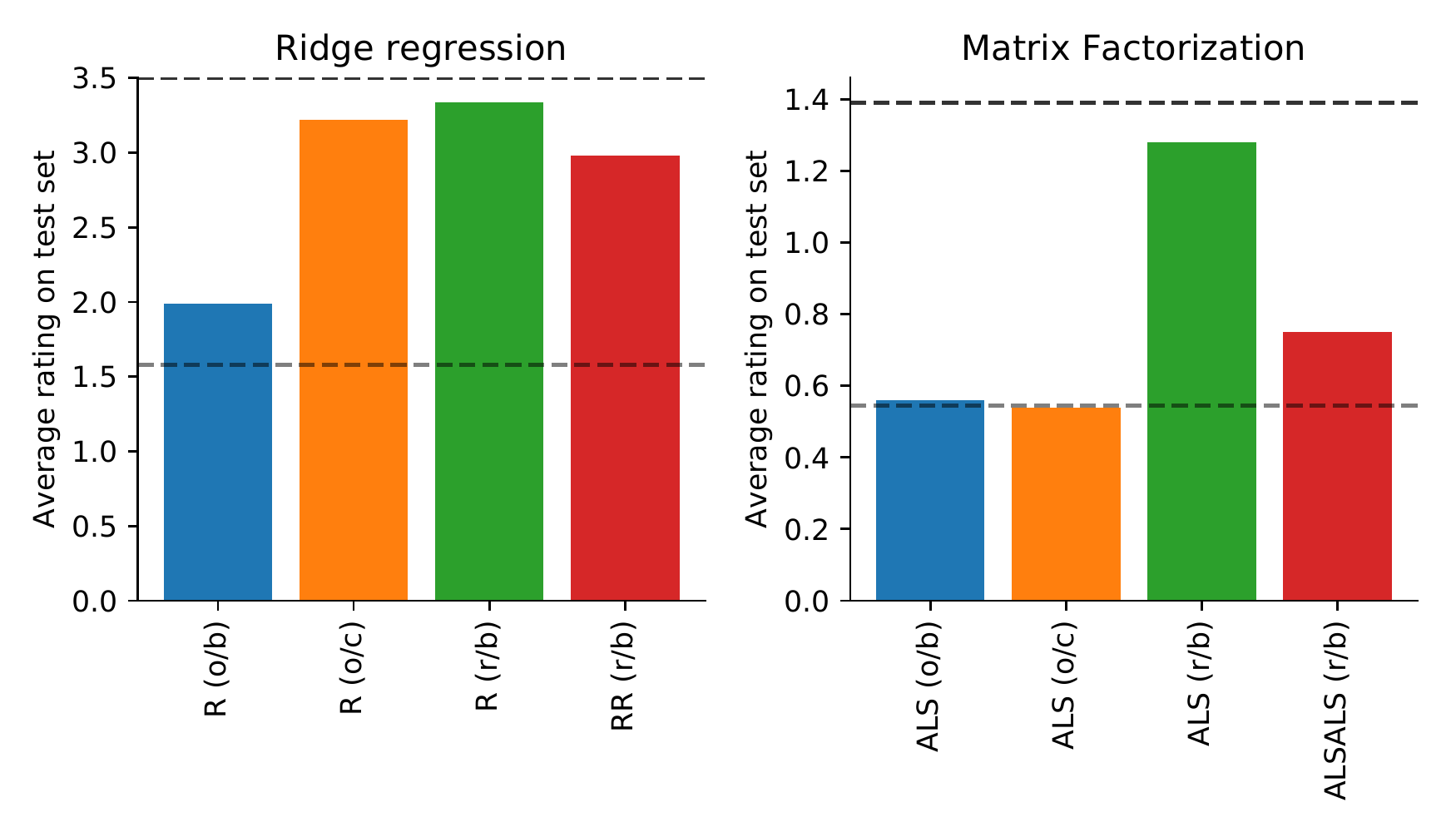}
    \caption{
We compare performances of recommendation systems based on different training data
in a feature-based model on the left and matrix factorization on the right. Green corresponds to random selections in the training set,
orange to oracle selections with unbiased feedback, red to the iterated model initially
based on random selections, and blue is based on biased oracle ratings.
}
    \label{fig:lr_mf}
\end{figure}

\subsubsection{Matrix factorization}
\label{sec:mf}

In this section we investigate the dynamics of private preferences that are low rank.
We use the same two-stage approach as before, where we first use a fixed
scoring rule to generate a training set, fit our model, and use the fitted scoring rule
to generate a test set to measure performance.

\textbf{The model}
The low-rank models assumes that the value for item $i$ by user $j$ follows the model
\[
    v_{ij} = a_i + b_j + u_i^Tv_j + x_i^Ty_j + \epsilon_{it}
\]
where $u_i$ and $v_j$ are (hidden) $q$-dimensional vectors modeling interaction between user and item,
and $a_i$ and $b_j$ are terms modeling overall differences between users and items.
Similarly, $x_i$ and $y_j$ are $q$-dimensional vectors that combine the private preference $\theta_{it} = x_j^Ty_j$.
Note that, unlike in previous settings, here we observe the same user multiple times.
The recommendation system provides score $s_{ij}$ for user $i$ and item $j$ and the user selects item
\[
    \arg\max_j s_{ij} + x_i^Ty_j
\]
and reports her value $V_{ij}$.
We use alternating least squares \citep{Koren2009MatrixFT} to estimate $a$, $b$, $u$ and $v$.\footnote{
    We ensure that users do not rate the same item in both the training and test set.}

\textbf{Simulation details}
As in the feature-based simulation, we generate four training sets,
one based on an oracle, one based on an oracle with unbiased ratings,
one based on random selections, and finally an iterated version of the random selections process,
where we fit a model to the random selections data and use that model to generate training data.

We simulate 2000 users and 500 items, with rank $q=4$.
Entries of $u, v, x$ and $y$ are independent Gaussians with variance $1/q$.
To reduce variance, the error term has a small variance, $\epsilon_{ij} \sim N(0, 0.01)$ and
for both training and test sets each user rates $40$ items.
We run alternating least squares with rank $2q$ and varying regularization.

% \begin{figure}[h]
%     \centering
%     \includegraphics[width=0.5\textwidth]{img/mf_als.pdf}
%     \caption{
% Comparison of ratings based on Alternating Least Squares.
% Dashed lines provide benchmarks of recommending perfect scores (p), zero scores (n-s) and random selections (r).
% Blue lines corresponds to perfect scores with bias.
% Orange line corresponds to perfect scores with no bias.
% Green and red lines correspond to random selections with and without bias, respectively.
% }
%     \label{fig:mf}
% \end{figure}

\textbf{Results}
The two dotted lines denote the same benchmarks as before.
Again, we notice that recommendations trained on random data perform best,
but this time the difference is much more pronounced.
The recommendations based on perfect recommendations (blue and orange) perform a lot worse.
In fact, they do barely better than not recommending items at all and having
users base their choice solely on their own preference signals.
Part of the degradation in performance seems to be caused by a feedback loop;
the observations are not randomly sampled.
We also notice a much stronger degradation in performance of the iterated model (red).
This suggests that the dynamic nature of recommendation systems affect matrix factorization
methods more severely than the simpler linear model from the previous section.

    \section{Discussion}
\label{sec:conclusion}

In this work, we introduce a model for analyzing feedback in recommendation systems.
We propose a simple model that explicitly looks at heterogeneous preferences
among users of a recommendation system,
and takes the dynamics of learning into account.
We then consider the consistency and efficiency of natural estimators in this model.
Recent work has focused on exploration, or efficiency, with selfish agents.
On the one hand, preferences lead to inconsistent estimators if this aspect is not taken into account.
On the other hand, we also show that there is an upside to heterogeneous preferences;
they automatically lead to efficiency.
Using simulations, we demonstrate that these phenomena persist when we use
more sophisticated recommendation methods, such as matrix factorization.

\subsection{Future work}
\label{sec:future}

There are several directions of further research.
Our simplified model does not capture all aspects of recommendation systems.
The most interesting aspect is that, in practice,
users only observe a limited set of recommendations,
rather than the entire inventory.
This can lead to an inefficiency in the rate of exploration,
and requires further study.
% This is something we have not modeled and has clear implications on the rate of exploration.

Our model and simulations show that consistency of models is an issue
that is difficult to resolve.
We believe that progress can be made.
Theoretically, one possible avenue is to also model the selection process
directly and combine it with the model for outcomes.
Empirically, by large scale studies that test the effects of human
interaction on estimators.

% Beyond exploration, things get more interesting when there are features
% that are correlated with both the feedback ($V_t$) and the user selection ($a_t$).
% To do well while only supplying a limited set of recommendations,
% the server has to combine an outcome model (based on the rating) with a selection model.
% It is unclear how to do this optimally, as it requires balancing showing items
% that users are likely to select with items that they are likely to rate highly.
% For example, if a certain feature is a strong predictor for selection,
% it is likely a weak predictor for outcome conditioned on selection;
% if users base their selection heavily on a particular feature,
% then there is little variance left to exploit for
% the feedback part of the model.\footnote{
% 	This is best illustrated with an example. Suppose users base their movie selection on genre, such that
% 	a user that loves comedies only selects comedies, then there is no feedback for thrillers.
% 	Therefore, the server has trouble picking up on the correlation between genre and feedback,
% 	even though this effect could be strong.}

\subsection{The bigger picture}
We believe that this work has raised fundamental and important issues
relating the interaction between
machine learning systems and the users interacting with them.
Algorithms not only consume data, but in their interaction with users
also create data, a much more opaque process but equally vital in designing
systems that achieve the goals we set out to achieve.
There is still a lot of room for improvement by gaining a better understanding of these dynamics.

    % removed for reviews
    % \input{tex/acknowledgements}

    \appendix

\section{Appendix}

% \subsection{Proof bias}
% \label{sec:proof_bias}

\begin{proof}[Proof of Proposition~\ref{thm:bias}]
    We prove the result by showing that the best item
    cannot always be ranked at the top, because that would
    depress its score $s_{it}$ sufficiently much that it cannot be at the top.

    Fix a sample path $\omega \in \Omega$.
    Note that by assumption, each arm is optimal for a constant fraction of agents.
    Define
    \[
        x_{it} = \frac{ | \{ \tau : a_\tau = i \} | }{ t }.
    \]
    Then, if $\liminf_t x_{it} < x_i^*$ for some sufficiently small $x_i^* > 0$,
    we incur linear regret almost surely.
    Instead, assume that each arm is sampled a constant fraction,
    $\liminf_t x_{it} > \delta_i$ for some $\delta_i$ for each arm $i$.
    We note that the expected reward for the item ranked highest is
    \[
        Q_i + \frac{ p } { p + (1-p)^K } = Q + \rho,
    \]
    where we define $\rho = \frac{ p }{ p + (1-p)^K }$:
    With probability $p$ this item is chosen because of a positive signal,
    and with probability $(1-p)^K$ it is chosen because none of the items
    have a positive signal.
    For the other items, the expected reward is $Q_i + 1$.

    To understand limiting behavior of the item scores, it is
    thus important to understand how often an item is ranked first by
    the platform.
    Define $c_{t}$ as the fraction (up to time $t$)
    that the first (best) item is \emph{not} ranked at the top:
    \[
        c_t = \frac{ | \{ \tau  < t: \exists j > 1 :  s_{1\tau} < s_{j\tau} \} | }{ t }.
    \]
    We note that if $\limsup_t c_t > c^*$ for some $c^*>0$, then
    the regret is linear.

    Informally, we proceed by bounding $\P(\text{item is ranked first} \mid \text{item is selected})$,
    and use that to understand the evolution of the averages of ratings the platform
    observes.
    To bound the above probability, we note that there are two extremes
    when the item is not ranked first; it is ranked second, or ranked last.
    If it is always ranked second when the item is not ranked first,
    it is less likely the item was ranked first given selection
    than when it is either ranked first or last.
    If, overall, the item is ranked first with fraction $y$,
    then we obtain
    \[
        \lambda(y) \le \P(\text{item ranked first} \mid \text{item selected}) \le \lambda'(y)
    \]
    where
    \[
        \lambda(y) = \frac{y (p + (1-p)^K)}{y(p + (1-p)^K) + (1-y) p (1-p)},
    \]
    and
    \[
        \lambda'(y) = \frac{y (p + (1-p)^K)}{y(p + (1-p)^K) + (1-y) p (1-p)^{K-1}}
    \]
    correspond to the two extreme cases.
    Note that $\lambda$ and $\lambda'$ are both decreasing.\footnote{
        Both have the form $\frac{(1-x)a}{(1-x)a + xb}$ for $a, b \in (0,1)^2$,
        which has a negative derivative for $x \in (0, 1)$}

    Now suppose $\limsup c_t = c$.
    By the stong law of large numbers, the empirical average converges to its mean and thus
    \[
        \limsup_t s_{1t} \le Q_1 + \lambda(1-c) \rho + (1-\lambda(1-c)),
    \]
    where the second term corresponds to the expected reward from being ranked first
    and the last term corresponds to the contribution from when the action is not ranked first.
    Similarly
    \[
        \liminf_t s_{2t} \ge Q_2 + \lambda'(c) \rho + (1-\lambda'(c)),
    \]
    almost surely by the mean-converging condition.

    We note for $c = 0$, this leads to
    \[
        \limsup_t s_{1t} \le Q_1 + \rho
        \text{  and  }
        \liminf_t s_{2t} \ge Q_2 + 1
    \]
    This is a contradiction if $\Delta < \frac{(1-p)^K}{p + (1-p)^K}$,
    as this would imply the score of the second arm is higher in the limit than
    that of the first arm, while the first item is always ranked before the second item ($c=0$):
    \begin{align}
        \limsup_t s_{1t}
        % &\le Q_1 + \rho \\
        &= Q_2 + \Delta + \rho \\
        &< Q_2 + \frac{(1-p)^K}{p + (1-p)^K} + \frac{p}{p + (1-p)^K} \\
        % &= Q_2 + 1 \\
        &\le \liminf_t s_{2t}.
    \end{align}
    Furthermore, since $\lambda$ and $\lambda'$ are continuous and monotone,
    there must exist some $c^* \in (0, 1)$ such that
    \begin{multline}
        Q_1 + \lambda(1-c^*)\rho + 1-\lambda(1-c^*) \le \\
        Q_2 + \lambda'(c^*) + 1-\lambda'(c^*)
    \end{multline}
    almost surely.
    Thus, if the first item is the top ranked item fracion $1-c^*$ of the time,
    then its score is almost surely lower than the second item, which is
    a contradiction.
    This implies that $\limsup_t c_t > c^*$ almost surely,
    which proves that the regret is linear.
\end{proof}

\begin{proof}[Proof Proposition~\ref{thm:exploration}]
  To bound the regret, we look at individual arms and note that
  if at time $t$ all scores $s_{it}$ are reasonably accurate,
  i.e. $| s_{it} - Q_i | < \lambda$ for all $i$,
  at such time the regret is at most $2\lambda$.
  Furthermore, if $\lambda < \frac{\Delta_{\min}}{2}$, then
  the regret is $0$ as each agent is compelled to pick the best item for them.
  Finally, it is important to note that the regret at each period is at most $2$.

  We proceed as follows; we use concentration to bound the estimation error
  when we have observed enough sample values.
  Furthermore, we show that due to natural exploration,
  we have a high probability guarantee of observing samples for each item.
  When combined, they lead to a logarithmic regret bound.

  To use a concentration bound on the estimation error,
  we define event
  \[
    A_m(i, \lambda) =
    \left\{
      \exists s \in \{m, \ldots, T\} :
      \frac{1}{s} \left| \sum_{j=1}^s \epsilon_{ij} \right| > \lambda
    \right\}.
  \]
  That is, $A_m(i, \lambda)$ is the bad event that after $m$ pulls,
  there is some time $t$ that the score $s_{it}$ is off by more than $\lambda$.

  Furthermore, we define events
  \[
    B_{m}(i, M) =
    \left\{
      |S| < m : \tau \in S \iff a_\tau = i \text{ and } \tau < M
    \right\}
  \]
  that indicate whether within $M$ time steps,
  at least $m$ users reported values for item $i$.

  Using these two events, we can bound the expected regret by
  \begin{multline}
    \E [\regret(T)] \le
    \sum_{i=1}^K 2 ( \P(A_m(i, \lambda)) + \P(B_m(i, M)) ) T \\
    + 2M + \lambda T \ind_{\lambda > \frac{\Delta_{\min}}{2}}
    \label{eqn:initregretbound}
  \end{multline}

  \textbf{Bounding $A_m$}
  Using the standard $\sigma$-sub-Gaussian concentration bound (see,
  for example, \citet[Chapter 2]{Wain}),
  we have
  \begin{align}
      \P(A_{m}(i, \lambda))
      &\le
          \P \left( \exists s \in \{m,\ldots,t\} :
              \frac{1}{s} \left| \sum_{i=1}^s \epsilon_i \right| > \lambda
          \right) \\
      &\le
          \sum_{s=m}^t \P \left(
              \frac{1}{s} \left| \sum_{i=1}^s \epsilon_i \right| > \lambda
          \right) \\
      &\le
          2 \sum_{s=m}^t \exp \left( - \frac{s\lambda^2}{2\sigma^2} \right) \\
      &\le
          2 \int_m^{t+1} \exp \left( - \frac{s\lambda^2}{2\sigma^2} \right) ds \\
      &\le
          \frac{4\sigma^2}{\lambda^2} \exp \left( -\frac{m\lambda^2}{2\sigma^2} \right)
  \end{align}
  Now set
  \[
      m = \frac{2 \sigma^2 (\log(T) - \log(\lambda))}{\lambda^2},
  \]
  and obtain
  \[
      \P(A_m(i, \lambda))
      \le \frac{4\sigma^2}{\lambda^2} \exp \left( -\frac{m\lambda^2}{2\sigma^2} \right)
      = \frac{4 \sigma^2}{\lambda T}
  \]

  \textbf{Bounding $B_m$}
  From the above, we know that the estimation error concentrates
  well after observing $m$ selections.
  Now we show that with high probability,
  it does not take too long to wait for $m$ selections.

  First note that the probability of selection of
  any item at any time $t$ is at least $2^{1-K} \gamma$.
  This follows from the conditions imposed on $F_i$.
  For $M > m$, we note that the probability
  that we have not observed $m$ selections is lower bounded by a Binomial random
  variable $Z \sim B(M, 2^{1-K}\gamma)$ since preferences are independent between agents.
  Consider
  \[
    M = \frac{2\alpha m}{2^{1-K}\gamma}
    = \frac{4\alpha \sigma^2 (\log(T) - \log(\lambda))}{\lambda^2 2^{1-K}\gamma}
  \]
  where $ \alpha = \max \left(1, 2\lambda^2 / \sigma^2 \right)$ .

  First we note that in this case, $ \E(Z) = 2\alpha m \ge 2m $
  and thus
  \begin{align}
      \P(B_m)
      % &\le
      %     \P(Z \le m) \\
      &\le
          \P \left( Z \le \frac{1}{2} \E(Z) \right) \\
      &\le
          \exp \left(
              - \frac{ \E (Z) }{ 8 }
          \right)\\
      &\le
          \exp \left(
              - \frac{ \alpha \sigma^2 (\log(T) - \log(\lambda) ) }{ 2 \lambda^2 }
          \right)\\
      % &\le
      %     \exp \left(
      %         - (\log(T) - \log(\lambda) )
      %     \right)\\
      &\le \frac{\lambda}{T}
  \end{align}
  where third inequality is a standard Chernoff bound and
  the second to last step follows from the condition on $\alpha$.

  Plugging these bounds on $A_m(i, \lambda)$ and $B_m(i, M)$
  in to our bound for regret (\ref{eqn:initregretbound}),
  we obtain
  \[
  \begin{split}
      \E[\regret(T)]
      &\le 2\left( \frac{4\sigma^2}{\lambda} + \lambda \right)K\\
      &+ \frac{8\alpha \sigma^2 K(\log(T) - \log(\lambda))}{\lambda^2 2^{1-K}\gamma}
        + \lambda KT \ind_{\lambda > \frac{\Delta_{\min}}{2}}
  \end{split}
  \]
  and thus if we set $\lambda = \frac{\Delta_{\min}}{2}$, we find
  \[
  \begin{split}
      \E[\regret(T)]
      &\le \left( \frac{16\sigma^2}{\Delta_{\min}} + \Delta_{\min} \right) K\\
      &+ \frac{32 \alpha \sigma^2 K(\log(T) - \log(\Delta_{\min})
      + \log(2))}{\Delta_{\min}^2 2^{1-K}\gamma}
  \end{split}
  \]
  as desired.
\end{proof}

    \bibliographystyle{ACM-Reference-Format}
    \bibliography{bib/bibliography}

\end{document}